\def\d{{\mathrm d}}
\newcommand{\nrm}[1]{\left\Vert #1 \right\Vert}
\newcommand{\R}{\mathbb{R}}
\newcommand{\identity}{\boldsymbol{I}}
\newcommand{\N}{\mathbb{N}}
\newcommand{\Q}{\boldsymbol{Q}}
\newcommand{\estQ}{\widehat{\boldsymbol{Q}}}
\newcommand{\paren}[1]{\left( #1 \right)}
\newcommand{\tlprn}[1]{\left\{ #1 \right\}}
\newcommand{\abs}[1]{\left| #1 \right|}
\newcommand{\beq}{\begin{eqnarray*}}
\newcommand{\eeq}{\end{eqnarray*}}
\newcommand{\beqn}{\begin{eqnarray}}
\newcommand{\eeqn}{\end{eqnarray}}
\newcommand{\ben}{\begin{enumerate}}
\newcommand{\een}{\end{enumerate}}
\newcommand{\bit}{\begin{itemize}}
\newcommand{\eit}{\end{itemize}}
\newcommand{\hide}[1]{}
\newcommand{\argmin}{\mathop{\mathrm{argmin}}}
\newcommand{\trn}{^\intercal} %
\newcommand{\inv}{^{-1}} %
\newcommand{\unit}{\boldsymbol{1}}
\newcommand{\zero}{\boldsymbol{0}}
\newcommand{\diag}{\operatorname{diag}}
\newcommand{\gn}{\, | \,}
\newcommand{\pr}{\boldsymbol{\mathrm{P}}}
\DeclareMathOperator{\Binomial}{Binomial}
\DeclareMathOperator{\Geometric}{Geometric}
\DeclareMathOperator{\Spec}{Spec}
\newcommand{\Var}[2][]{
		\mathbb{V}\mathbf{ar}_{#1}\left[ #2 \right]
}
\newcommand{\PR}[2][]{
		\mathbb{P}_{#1}\left( #2 \right)
}
\newcommand{\estmc}{\widehat{\mc}} %
\newcommand{\trans}[2]{N_{#1 #2}} %
\newcommand{\etab}{\boldsymbol{\eta}}
\newcommand{\btau}{\boldsymbol{\tau}}
\newcommand{\esttau}{\hat{\tau}}
\newcommand{\estbtau}{\hat{\btau}}
\newcommand{\thit}[1]{U_{#1}}
\newcommand{\tall}{T\cliq}
\newcommand{\mmrisk}{\mathcal{R}_{m}}
\newcommand{\X}{\boldsymbol{X}}
\newcommand{\Y}{\boldsymbol{Y}}
\newcommand{\Gclass}{\mathcal{G}}
\newcommand{\kl}[2]{D_{KL}\left(#1 \middle| \middle| #2\right)}
\newcommand{\eps}{\varepsilon}
\newcommand{\vertiii}[1]{{\left\vert\kern-0.25ex\left\vert\kern-0.25ex\left\vert #1 
    \right\vert\kern-0.25ex\right\vert\kern-0.25ex\right\vert}}
\newcommand{\tv}[1]{\nrm{#1}_{\mathsf{TV}}}
\newcommand{\TV}[1]{\vertiii{#1}}
\renewcommand{\H}{\mathcal{H}}
\newcommand{\M}{\mathcal{M}}
\newcommand{\mc}{\boldsymbol{M}}
\newcommand{\bpi}{\boldsymbol{\pi}}
\newcommand{\bmu}{\boldsymbol{\mu}}
\newcommand{\pred}[1]{\boldsymbol{1}\tlprn{#1}}
\newcommand{\oclprn}[1]{\left( #1 \right]}
\renewcommand{\epsilon}{\eps}
\renewcommand{\pr}{\mathbb{P}}
\newcommand{\prob}[1]{\pr\paren{#1}}
\newcommand{\probm}[2]{\pr_{#1}\paren{#2}}
\newcommand{\E}[2][]{
		\mathbb{E}_{#1}\left[ #2 \right]
}
\newcommand{\expo}[1]{\exp\left( #1 \right)}
\newcommand{\sg}{\gamma}
\newcommand{\pssg}{\gamma_{\mathsf{ps}}}
\newcommand{\clearn}{c}
\newcommand{\cliq}{_{\textrm{{\tiny \textup{CLIQ}}}}}
\newcommand{\cover}{}
\newcommand{\tmix}{t_{\textrm{\textup{mix}}}}
\newcommand{\mnrm}[1]{\nrm{#1}_{\mathsf{MAX}}}
\newcommand{\pimin}{\pi_\star}
\newcommand{\eqdef}{\doteq}
\newcommand{\x}{\boldsymbol{x}}
\newcommand{\y}{\boldsymbol{y}}
\newcommand{\asg}{\gamma_\star}
\newcommand{\rev}{^{\dagger}}
\newcommand{\e}{\boldsymbol{e}}
\newcommand{\law}{\mathcal{L}}
\newcommand{\tildemc}{\widetilde{\mc}}
\newcommand{\tildebpi}{\tilde{\bpi}}
\newcommand{\set}[1]{\left\{ #1 \right\}} %
\newtheorem{theorem}{Theorem}[section]
\newtheorem{lemma}{Lemma}[section]
\newtheorem{definition}{Definition}[section]
\newtheorem{corollary}{Corollary}[section]
\newtheorem{remark}{Remark}[section]
\newenvironment{proof}{\paragraph{Proof:}}{\hfill$\square$}
\numberwithin{equation}{section}
\begin{document}

\title{Statistical Estimation of Ergodic Markov Chain Kernel over Discrete State Space}

\author{Geoffrey Wolfer \\ \texttt{geo.wolfer@gmail.com} \and Aryeh Kontorovich \\ \texttt{karyeh@cs.bgu.ac.il}}

\maketitle

\begin{abstract}
We investigate the statistical complexity of estimating the parameters of a discrete-state
Markov chain kernel from a single long sequence of state observations.
In the finite case, we characterize
(modulo logarithmic factors)
the minimax
sample complexity of estimation with respect to the operator infinity norm,
while in the countably infinite case, we analyze the problem with respect to a natural entry-wise norm derived from total variation.
We show that in both cases, the sample complexity
is governed by
the mixing properties of the unknown chain,
for which,
in the finite-state case,
there are known finite-sample estimators with fully empirical confidence intervals.
\end{abstract}

\section{Introduction}
\label{sec:intro}
Approximately recovering the parameters of a discrete distribution
is a classical problem in computer science and statistics
(see, e.g., 
\citet{han-minimax-l1-2015,DBLP:conf/colt/KamathOPS15,DBLP:conf/nips/OrlitskyS15}
and the references therein).
Total variation (TV) is a natural and well-motivated choice
of approximation metric
\citep{MR1843146}, and
the two
metrics
we use 
throughout the paper
will be derived from TV.
The minimax sample complexity for obtaining an
$\eps$-approximation
to the unknown distribution
in TV (but see
\citet{DBLP:conf/innovations/Waggoner15} for results on other $\ell_p$ norms) is well-known to be 
of the order of $d/\eps^2$,
where $d$ is the support size
(see, e.g., \citet{MR1741038,KonPin2019}).

This paper deals with estimating the transition probability kernel
of a discrete state time-homogeneous Markov chain
in the minimax setting.
The Markov case
is much less well-understood than the iid one.
The main additional complexity introduced by the Markov case on top of the
iid one
is that
the sample complexity involves not only
the
number of states
and the precision parameter $\eps$,
but also
the chain's mixing properties.

\paragraph{Our contributions.}
In the finite-state case,
we compute, up to logarithmic factors,
(apparently the first, in {\em any} metric)
high-probability minimax sample complexity
for the estimation problem in
the Markovian setting, which
seeks to recover,
from a single long run of an unknown Markov chain,
the values of its transition matrix
up to a tolerance of $\eps$ in the
$\nrm{\cdot}_\infty$ operator norm.
We obtain upper and lower bounds on the sample complexity
(sequence length)  in terms of $\eps$, the number of states,
the stationary distribution,
and mixing time of the Markov chain.

In the countably infinite case,
for a natural class of chains and with respect to an entry-wise metric 
derived from TV, we derive an upper bound on the sample complexity that depends
in a delicate way on some measure of complexity of the kernel, precision $\eps$ 
and mixing time, and provide sufficient conditions on the kernel and initial distribution 
to obtain convergence guarantees.

\section{Definitions and notation}
\label{sec:defnot}

We denote by $\Omega$ the state space of the Markov chain and by $m$
the size of the sample received by the estimation procedure.
The simplex of all distributions over $\Omega$ will be denoted by $\Delta_\Omega$, and the set of all Markov kernels by $\M_\Omega$. 
For $\abs{\Omega} < \infty$, we
put $d:=|\Omega|$
and
$[d] = \Omega = \set{1, 2, \dots , d}$.
For $\bmu \in \Delta_\Omega$, we will write either $\bmu(i)$ or $\mu_i$,
as dictated by esthetics and convenience.
All vectors are rows unless indicated otherwise. We use the standard total variation norm,
which, up to a convention-dependent factor of $2$,
corresponds to the $\ell_1$ norm: $2 \tv{\x} = \nrm{\x}_1 = \sum_{i\in \Omega}|x_i|$. 
We assume familiarity with basic Markov chain concepts
(see, e.g., \citet{MR0410929,levin2009markov}).
A time-homogeneous Markov chain $(\mc,\bmu)$ on state space $\Omega$ is specified
by an initial distribution $\bmu\in\Delta_\Omega$
and a kernel $\mc \in \M_\Omega$ in the usual way:
$(X_1,\ldots,X_m)\sim(\mc,\bmu)$ means that
$$\prob{(X_1,\ldots,X_m)=(x_1,\ldots,x_m)}=\bmu(x_1)\prod_{t=1}^{m-1}\mc(x_t,x_{t+1}).$$
We write $\probm{\mc,\mu}{\cdot}$ to denote
probabilities
over sequences
induced by 
the Markov chain
$(\mc,\bmu)$,
and omit one or both subscripts when clear from context. We say that $\bpi$ is a \emph{stationary distribution} for $\mc$ if $\bpi \mc = \bpi$, and that
the Markov chain $(\mc,\bmu)$ is {\em stationary} if $\bmu=\bpi$. We will assume the chain to be \emph{irreducible} and \emph{positive recurrent}.
Namely,
$\mc$ consists of a single communicating class,
and defining the \emph{return time} of state $i$ as $T_i = \min \set{t \geq 0 : X_t = i}$, 
we have that for any state $i \in \Omega, \E{T_i} < \infty$. This is sufficient to guarantee existence of a stationary $\bpi$. 
We will further restrict our analysis to \emph{geometrically ergodic} Markov chains to enable spectral methods.
\begin{definition}[Geometric ergodicity, \cite{roberts1997geometric}]
\label{definition:geometric-ergodicity}
The chain $(\mc,\bmu)$ with stationary distribution $\bpi$ is geometrically ergodic if
there is a 
$\rho \in (0,1)$ and
for all
$i \in \Omega$
there is a
$C_i \in \mathbb{R}_+$ such that
$$\tv{\mc^t(i, \cdot) - \bpi} \leq C_i \rho^t, \qquad t \in \mathbb{N}.$$
\end{definition}
\noindent Any chain that satisfies all the above properties will henceforth simply be called \emph{ergodic}, and all chains mentioned in this work will be assumed ergodic unless stated otherwise.
If $\mc$ is ergodic
with stationary distribution
$\bpi$,
then $\bpi$ is necessarily unique.
To any Markov chain $(\mc,\bmu)$, we associate the following measure of non-stationarity
\beqn
\label{eq:Pimu-def}
\nrm{\bmu/\bpi}_{2,\bpi}^2 \eqdef \sum_{i \in \Omega} \bmu(i)^2/\bpi(i) \in
[1,\infty]
,
\eeqn
where the
$\nrm{\cdot}_{2,\bpi}$
norm
is
induced by the inner product in the Hilbert space
$\ell_2(\bpi)$ \citep[Chapter~12]{levin2009markov}.
When $\abs{\Omega} < \infty$, we can define the \emph{minimum stationary probability} by
\beqn
\label{eq:pistar-def}
\pimin \eqdef \min_{i\in \Omega }\bpi(i).
\eeqn
In this case, by ergodicity $\pimin > 0$ and $\nrm{\bmu/\bpi}_{2,\bpi}^2 \leq \frac{1}{\pimin} < \infty$. The \emph{mixing time} of an ergodic $\mc$ is defined by
\beqn
\label{eq:mixing-time}
\tmix \eqdef \inf \set{ t \geq 1 : \sup_{\bmu \in \Delta_\Omega} \tv{\bmu \mc^t - \bpi} < \frac{1}{4}}.
\eeqn
\noindent
We define $\Q \eqdef \diag \left( \bpi \right) \mc$
as
the matrix
$\Q(i,j) = \PR[\bpi]{X_{t} = i, X_{t+1} = j}$.
A chain $\mc \in \M_\Omega$  is said to be {\em reversible} if $\Q \trn = \Q$. The eigenvalues of an ergodic and reversible $\mc$ lie in $\oclprn{-1,1}$,
and thus may be ordered (counting multiplicities): $1 = \lambda_1\ > \lambda_2\ge\ldots\ge\lambda_d > -1$. The {\em spectral gap} and \emph{absolute spectral gap} of a reversible chain
are defined, respectively, by
\beqn
\label{eq:gamma-def}
\sg \eqdef 1-\lambda_2 \text{ and } \asg \eqdef 1 - \max \set{ \lambda_2, \abs{\lambda_d}}.
\eeqn
\citet{paulin2015concentration} generalizes the
      {\em multiplicative reversiblization} approach of \citet{0726.60069}
      by defining
the {\em pseudo-spectral gap}
\beqn
\label{eq:pseudo-gamma-def}
\pssg \eqdef \max_{k \geq 1} \set{ \sg((\mc \rev)^k \mc^k)/k },
\eeqn
where $\mc \rev$ is the {\em time reversal} of $\mc$
--- the adjoint of $\mc$ under $\ell_2(\bpi)$
--- given by $\mc \rev(i,j) \eqdef \bpi(j)\mc(j,i)/\bpi(i)$. 

For a linear operator $\boldsymbol{A}: \Omega \to \Omega$,
\beqn
\label{eq:normdef}
\nrm{\boldsymbol{A}}_\infty = \sup_{i\in \Omega}\sum_{j \in \Omega}\abs{\boldsymbol{A}(i,j)}
\eeqn
is the operator norm induced by
$\ell_\infty$
\citep{horn-johnson}.
We also define
the following entry-wise norm
\beqn
\label{eq:normtvdef}
\TV{\boldsymbol{A}} \eqdef \sum_{(i,j) \in \Omega^2}\abs{\boldsymbol{A}(i,j)}.
\eeqn
The norms in
\eqref{eq:normdef} and \eqref{eq:normtvdef}
induce our two notions of distance between
Markov kernels $\mc, \mc'$ with respective stationary distributions $\bpi$ and $\bpi'$:
$$\nrm{\mc - \mc'}_\infty \text{ and } \TV{\boldsymbol{Q} - \boldsymbol{Q}'}.$$
For any $\mc\in\M_\Omega$, define its {\em Dobrushin contraction} coefficient
\beqn
\label{eq:dobr}
\kappa(\mc) \eqdef
\max_{(i,j) \in \Omega^2} \tv{ \mc(i, \cdot) - \mc(j, \cdot)};
\eeqn
this quantity is also associated with D\"oblin's name. The term ``contraction''
refers to the property
\beqn
\label{eq:mark-conc}
\tv{
  (\bmu-\bmu')\mc
}
\le\kappa(\mc)\tv{\bmu-\bmu'}
,
\qquad
(\bmu,\bmu') \in \Delta_\Omega^2,
\eeqn
which was observed
by
\citet[$\mathsection$5]{mar1906}
(see \citet[Lemma A.2]{kontram06} for an elementary proof).

\section{Main results}
\label{sec:main-res}
In Section~\ref{section:results-finite-case} we formally state the minimax results for the finite state setting, 
and then exhibit our results for the countably infinite case in Section~\ref{section:results-infinite-case}.

\subsection{Estimation with respect to \texorpdfstring{$\nrm{\cdot}_\infty$}{the infinity norm}
  for finite \texorpdfstring{$\Omega$}{state space}
}
\label{section:results-finite-case}

\begin{theorem}[Sample complexity upper bound w.r.t $\nrm{\cdot}_\infty$ when $\abs{\Omega} < \infty$]
  \label{thm:learn-ub}
  Let $\eps \in (0, 2)$, $\delta \in (0, 1)$, and let $\X=(X_1,\ldots,X_m) \sim (\mc, \bmu)$, $\mc$ ergodic with stationary distribution $\bpi$.
Then an  
estimator $\estmc: \Omega^m \to \M_\Omega$ exists such that
whenever
$$m \geq \clearn \max \set{ \frac{1}{\eps^2 \pimin} \max \set{ d, \ln \frac{1}{\eps \delta} }, \frac{1}{\pssg \pimin} \ln \frac{d \nrm{\bmu/\bpi}_{2, \bpi}  }{\delta} }$$
we have,
with probability at least $1 - \delta$,
\beq
\nrm{\mc - \estmc}_\infty < \eps,
\eeq
where
$\clearn$ is a universal constant,
$d = \abs{\Omega}$,  $\pssg$ is the \emph{pseudo-spectral gap}
(\ref{eq:pseudo-gamma-def}), $\pimin$ the minimum stationary probability (\ref{eq:pistar-def}),
and $\nrm{\bmu/\bpi}_{2,\bpi}^2 \le1/\pimin$
is defined in (\ref{eq:Pimu-def}).
\end{theorem}

Although the sample complexity depends on the spectral quantity $\pssg$,
and minimal stationary probability $\pimin$ of the unknown chain,
these can be efficiently estimated with finite-sample data-dependent confidence intervals
from a single trajectory \citep{hsu2019, pmlr-v99-wolfer19a}. 
Moreover, even though the upper bound formally depends on the unknown
(and, in our one-trajectory setting, impossible to estimate)
initial distribution $\bmu$, we note that $(i)$ this dependence is only logarithmic
and $(ii)$ an upper bound on $\nrm{\bmu/\bpi}_{2,\bpi}^2$ in terms of $\pimin$ is easily provided.

\begin{remark}
This upper bound is superior to the one given at \citet[Theorem~1]{pmlr-v98-wolfer19a},
shaving a multiplicative factor of $\ln{d}$ off the first term,
except in the extremely high precision regime where $\ln \frac{1}{\eps} \geq d \ln{d}$.
\end{remark}

\begin{theorem}[Sample complexity lower bound w.r.t $\nrm{\cdot}_\infty$ when $\abs{\Omega} < \infty$]
  \label{thm:learn-lb}
~For every $\eps \in (0, 1/32)$, $\pssg \in (0, 1/8)$,
$d=6k\ge12$,
and every estimation procedure,
there exists
a $d$-state Markov chain $\mc$
with
pseudo-spectral gap $\pssg$ and stationary distribution $\bpi$ such that
the
estimation procedure
must require
a sequence $\X=(X_1,\ldots,X_m)$
drawn from the unknown $\mc$
of length at least
\beq
m \geq c \max\set{ \frac{d}{\eps^2 \pimin}, \frac{d \ln d}{\pssg}
} ,
\eeq
where $c$ is a universal constant,
to ensure $\nrm{\mc - \estmc}_\infty < \eps$ with probability greater than $9/10$,
and where $d, \pssg, \pimin$ are as in Theorem~\ref{thm:learn-ub}.
\end{theorem}

The proof of Theorem~\ref{thm:learn-lb}
actually yields a bit more than claimed in the statement.
For any
$\pimin \in \oclprn{0, 1/d}$,
a Markov chain $\mc$ can be constructed that achieves the
$\frac{d}{\eps^2\pimin}$ component of the bound.
Additionally,
the
$\frac{d}{\pssg}$ component
is achievable by a class of {\em reversible} Markov chains
with spectral gap $\sg \leq \pssg \leq 2 \sg$, and uniform stationary 
distribution --- for which $\pimin = 1/d$ --- exhibiting tightness of the obtained bound.

The form of the lower bound indicates that in some regimes,
estimating the pseudo-spectral gap up to constant multiplicative error, 
which requires $\frac{d}{\pssg}$ \citep{hsu2019, pmlr-v99-wolfer19a},
is as difficult as estimating the entire transition matrix
(for our choice of metric $\nrm{\cdot}_\infty$).
We stress that our procedure and guarantees only require ergodicity
(and not, say, reversibility) to work.

\subsection{Results for estimation with respect to \texorpdfstring{$\TV{\cdot}$}{|||.|||}}
\label{section:results-infinite-case}

Over an infinite space, $\pimin=0$
conveys no information,
which motivates an alternative notion of distance. 
For a chain $\mc$, the kernel of \emph{doublet frequencies} $\Q = \diag(\bpi) \mc$ encodes all 
information about an ergodic chain \citep{vidyasagar2014elementary}, and for two such operators, 
$\Q$ and $\Q'$ it is the case that 
$$\Q = \Q' \implies \mc = \mc'.$$
Further, it is easily verified
that 
$$\TV{\Q - \Q'} = 2 \tv{\Q - \Q'}
,$$
where we see $\Q$ and $\Q'$ as distributions over $\Omega \times \Omega$.

\begin{remark}
  The loss of our estimation problem is
  distinct from the one considered in \citet{haoorlitsky2018}, 
  which weights the state-wise expected loss with respect to the stationary
  distribution of the chain,
  and also allows for sample bounds independent of $\pimin$.
\end{remark}

\begin{theorem}
\label{theorem:instance-specific-upper-bound-finite}
Let $\eps \in (0, 2), \delta \in (0, 1)$, and $\X = (X_1, \dots, X_m) \sim (\mc, \bmu)$, $\mc$ ergodic 
with stationary distribution $\bpi$, and write $\boldsymbol{Q} = \diag \left( \bpi \right) \mc$. 
There exists an estimator $\widehat{\boldsymbol{Q}}: \Omega^m \to \Delta_{\Omega \times \Omega}$ such that for
\begin{equation*}
m \geq c \frac{\tmix}{\eps^2} \max \set{ \TV{\boldsymbol{Q}}_{1/2} , \ln \left( \frac{\nrm{\bmu/\bpi}_{2, \bpi}}{\delta} \right) },
\end{equation*}
we have
$\TV{\widehat{\boldsymbol{Q}} - \boldsymbol{Q}} < \eps$ with probability at least $1- \delta$, where
\begin{equation*}
\TV{\boldsymbol{Q}}_{1/2} \eqdef \left( \sum_{(i,j) \in \Omega^2} \sqrt{\Q(i,j) } \right)^2,
\end{equation*}
$c$ is a universal constant, $\tmix$ is defined at \eqref{eq:mixing-time}, and $\nrm{\bmu/\bpi}_{2, \bpi}$ is defined in \eqref{eq:Pimu-def}.
\end{theorem}

\begin{remark}
Necessary conditions for the  upper bound to be non-vacuous are that both $\TV{\boldsymbol{Q}}_{1/2} < \infty$ and $\nrm{\bmu/\bpi}_{2, \bpi} < \infty$.
Importantly, $\nrm{\bmu/\bpi}_{2, \bpi} < \infty$ implies but is not implied by $\bmu \ll \bpi$;
take, e.g.,
$\bmu(i) \propto \frac{1}{i^2}$ and $ \bpi(i) \propto \frac{1}{i^4}$.
Notice that in the special case where $\abs{\Omega} = d < \infty$,
we have
$\TV{\Q}_{1/2} \leq d^2$ and the bound reduces to
$\tmix \frac{ d^2}{\eps^2}$ (up to logarithmic factors).
The mixing time $\tmix$, unknown a priori, can be estimated 
with finite-sample empirical intervals \citep{pmlr-v117-wolfer20a}.
\end{remark}

\section{Overview of techniques}
\label{sec:techniques}
\subsection{Estimating with respect to the \texorpdfstring{$\nrm{\cdot}_\infty$}{infinity} norm when \texorpdfstring{$\abs{\Omega} < \infty$}{the space is finite}}

The upper bound for the estimation problem in Theorem~\ref{thm:learn-ub}
is achieved by
a (mildly smoothed) natural estimator defined at 
the beginning of Section~\ref{sec:proof-ub-finite}.
If
the stationary distribution
is
bounded away from $0$,
the chain will
visit each state a constant fraction
of the total sequence length.
Exponential concentration (controlled by the spectral gap)
provides high-probability confidence intervals about the expectations.
A technical complication is that the empirical distribution
of the transitions out of a state $i$, conditional on the number of visits
$N_i$
to that state, is not binomial but actually rather complicated
--- this is due to the fact that the sequence length is fixed
and so a large value of $N_i$ ``crowds out'' other observations.
We overcome this by
simulating a trajectory from the Markov chain with 
an array of independent random variables, as described in \citet[p.19]{Billingsley61}.
The factor $\nrm{\bmu/\bpi}_{2, \bpi}$ in the bounds quantifies
the price one pays for not assuming (as we do not)
stationarity
of the unknown Markov chain.

Our chief technical contribution is in establishing
the sample complexity
lower bounds for the finite space estimation problem.
We do this by constructing two distinct lower bounds. 

The
lower bound of $\frac{d \ln{d}}{\pssg}$ is derived 
by a covering argument
and
a classical reduction scheme to a collection of testing problems
using
a class of reversible Markov chains
we
construct,
with
a carefully controlled pseudo-spectral gap.\footnote{The family of chains
used in the lower bound of \citet{hsu2019} does not suffice
for our purposes; a considerably richer family is needed (see Remark~\ref{rem:HSK}).
}
The latter can be upper and lower bounded up to universal constants in three key steps.
First, we leverage the block structure of the transition matrix of
the non-perturbed member of the family to compute its entire spectrum explicitly (Lemma~\ref{lemma:exact-eigenvalues}),
and deduce its absolute spectral gap.
We then extend the bound to other members of the family, 
using \emph{Markov chain comparison techniques}, 
going through a well known variational definition of the spectral gap.
Finally, we conclude by showing that the pseudo-spectral and spectral gap
are within a factor of 2 for our class of symmetric Markov chains.

The lower bound of $\frac{d}{\eps^2 \pimin}$
is based on the observation that estimating the whole kernel is at least
as hard as estimating the conditional distribution a single state.
From here, we construct a class of matrices where one state is
both
hard to reach
and difficult to estimate,
by
constructing
mixture of
indistinguishable distributions for that particular state,
indexed by a large subset of the binary hypercube.
We
express the statistical distance between words of length $m$
distributed according to different matrices of this class in
terms of $\pimin$ and the KL divergence between the
conditional distributions of the hard-to-reach state,
by taking advantage of the structure of the class,
and invoke an argument from Tsybakov to conclude ours.

\subsection{Estimating with respect to the \texorpdfstring{$\TV{\cdot}$}{|||.|||} metric}

The extension to a countably infinite
setting requires an alternative notion of distance between chains.
The proof then introduces the natural counting estimator $\estQ(i,j) = \frac{N_{i j}}{m-1}$ of transitions from $i$ to $j$,
and starts by controlling the error in expectation. It reduces the problem to the study of the variance of the random variable $N_{i j}$, 
which is achieved by constructing another Markov chain with approximately the same mixing time, and invoking known 
results from \citet{paulin2015concentration} for the variance of sums of functions under the Markovian setting. 
The result is then obtained by controlling the fluctuations around this expectation by a bounded differences argument.

\section{Related work}
\label{sec:related-work}
Our Markov chain statistical estimation setup is a natural extension
of the PAC distribution learning model of \citet{DBLP:conf/stoc/KearnsMRRSS94}.
Despite the plethora of literature on estimating Markov transition matrices,
(see, e.g., \citet{Billingsley61,craig2002estimation,welton2005estimation})
we were not able to locate
any rigorous finite-sample PAC-type results. 

The minimax problem has recently received some attention,
and \citet{haoorlitsky2018} have, in parallel to us, shown the first minimax bounds,
in expectation,
for the problem of estimating the transition matrix $\mc$ of a Markov chain
under a certain class of divergences.
The authors consider the case where
$\min_{i,j} \mc(i, j)\ge\alpha > 0$,
essentially showing that for
some family of smooth $f$-divergences,
the expected risk is of the order of
$\frac{d f''(1)}{m \pimin}$.
The metric used in this paper is based on TV, which corresponds
to
the $f$-divergence
induced by
$f(t) = \frac12 \abs{t-1}$, which is not differentiable at $t = 1$.
The results of \citeauthor{haoorlitsky2018}
and the present paper
are complementary and not directly comparable.
We do note that
$(i)$ their guarantees are in expectation
rather than with high-confidence,
$(ii)$ our TV-based metric is not covered by their
smooth
$f$-divergence family,
and most important
$(iii)$ their notion of mixing is related to contraction
as opposed to the spectral gap.
In particular the $\alpha$-minoration assumption implies (but is not implied by)
a bound of $\kappa\le1-d\alpha$ on the Dobrushin contraction coefficient (defined in (\ref{eq:dobr});
see \citet[Lemma 2.2.2]{kont07-thesis} for the latter claim).
Thus, the family of $\alpha$-minorized Markov chains is strictly contained in the family of
contracting chains, which in turn is a strict subset of the ergodic chains we consider.

This paper is based on the conference version of \citet{pmlr-v98-wolfer19a} 
together with an extension to countably infinite spaces at Section~\ref{section:results-infinite-case}. 
Another key improvement over the extended abstract 
is in the proof of Theorem~\ref{thm:learn-lb}.
While the series of lemmas \citet[Lemma~8, Lemma~9, Lemma~11]{pmlr-v98-wolfer19a} 
showed that it is possible to control the pseudo-spectral
gap of our special family of chains via Cheeger's inequality
combined with a contraction-based argument, this technique relied
on heavy computations to bound the Dobrushin coefficient of the
two-step transition matrix. Moreover the proof for the extension to all members 
of the class was only sketched in \citet[Lemma~9]{pmlr-v98-wolfer19a}. 
In the present manuscript, we switch technique, 
compute the full spectrum of the unperturbed transition matrix instead,
and fully flesh out the proof for the extension to perturbed chains
using comparison techniques.
Finally, the upper bound at Theorem~\ref{thm:learn-ub} 
also improves upon
\citet[Theorem~1]{pmlr-v98-wolfer19a},
by relying on a simulation scheme from Billingsley,
instead of martingale techniques.

\section{Proofs}
\label{sec:proofs}

\subsection{Proof of Theorem~\ref{thm:learn-ub}}
\label{sec:proof-ub-finite}
\begin{remark}
  We thank an anonymous referee for the suggestion (and technique)
  to improve the logarithmic gap between the upper and lower bounds.
\end{remark}

Let $\eps \in (0, 2), \delta \in (0, 1)$, let $\mc$ be a $d$-state ergodic Markov kernel with stationary distribution $\bpi$, 
and first consider the stationary case $X_1, \dots, X_m \sim (\mc, \bpi)$. We define the natural counting random variables
\begin{equation*}
\begin{split}
N_i \eqdef \sum_{t = 1}^{m-1} \pred{X_t = i}, \qquad \trans{i}{j} \eqdef \sum_{t = 1}^{m-1} \pred{X_t = i, X_{t+1} = j}
\end{split}
\end{equation*} 
and the estimator of the kernel will be $\estmc(i, j) \eqdef \cfrac{\trans{i}{j}}{N_i}$ when $N_i \neq 0$ and $1/d$ when $N_i = 0$.
We decompose the error probability of the estimation procedure, 
while choosing an arbitrary value $n_i \in \N$ for the desired number of visits to each state $i \in [d]$, 
\begin{equation}
\label{eq:decomposition}
\begin{split}
\PR[\bpi]{\nrm{ \mc - \estmc }_\infty > \eps} &\leq \sum_{i=1}^{d} \sum_{n = n_i}^{3 n_i} \PR[\bpi]{ \nrm{\estmc(i, \cdot) - \mc(i, \cdot)}_1 > \eps \text{ and } N_i = n } \\
& + \PR[\bpi]{ \set{\exists i \in [d]: N_i \notin [n_i, 3n_i]} }
.
\end{split}
\end{equation}
We simulate a trajectory from $\mc$ with a collection of independent 
samples using the scheme described in \citep[p.19]{Billingsley61},
where we define the following infinite array of random variables,
\begin{equation*}
\begin{matrix}
X_{1,1} & X_{1,2} & \cdots & X_{1,t} & \cdots \\
X_{2,1} & X_{2,2} & \cdots & X_{2,t} & \cdots \\ 
\cdots & \cdots & \cdots & \cdots & \cdots \\ 
X_{d,1} & X_{d,2} & \cdots & X_{d,t} & \cdots \\ 
\end{matrix}
\end{equation*}
such that $\forall (i,j,t) \in [d]^2 \times \N, \PR{X_{i, t} = j} = \mc(i,j)$,
and the sampling procedure is as follows.
Start by drawing $\tilde{X}_1 \sim \bpi$.
$\tilde{X}_2$ is then defined to be $X_{\tilde{X}_1, 1}$, 
the first element of the $\tilde{X}_1$th row.
The process then continues inductively
recording random variables 
from left to right in their corresponding rows,
such that if $\tilde{X}_1, \tilde{X}_2, \dots, \tilde{X}_{t}$ 
have been defined,
then $\tilde{X}_{t+1} \eqdef X_{\tilde{X}_t, \tilde{N}_{\tilde{X}_t}^{(t)} + 1}$ 
where $\tilde{N}_{i}^{(t)} \eqdef \sum_{s = 1}^{t} \pred{\tilde{X}_s = i}$,
and for convenience, $\tilde{N}_{i} \eqdef \tilde{N}_{i}^{(m-1)}$.
Observe that $X_1, X_2,\dots,$ and $\tilde{X}_1, \tilde{X}_2, \dots$ 
are identically distributed. Then, writing
\begin{equation*}
\begin{split}
\widetilde{\mc}(i, \cdot) \eqdef \frac{1}{\tilde{N}_{i}} \sum_{j = 1}^{d} \sum_{t = 1}^{m-1} \pred{\tilde{X}_t = i, \tilde{X}_{t+1} = j} \e_j, 
\end{split}
\end{equation*}
we have
\begin{equation*}
\begin{split}
\PR[\bpi]{ \nrm{\estmc(i, \cdot) - \mc(i, \cdot)}_1 > \eps \text{ and } N_i = n } &= \PR{ \nrm{ \widetilde{\mc}(i, \cdot) - \mc(i, \cdot)}_1 > \eps \text{ and } \tilde{N}_i = n}. \\
\end{split}
\end{equation*}
In the event where $\tilde{N}_i = n$,
\begin{equation*}
\begin{split}
\widetilde{\mc}(i, \cdot) &= \frac{1}{n} \sum_{j = 1}^{d} \sum_{t = 1}^{m-1} \pred{\tilde{X}_t = i, X_{\tilde{X}_t, \tilde{N}_{\tilde{X}_t}^{(t)} + 1} = j} \e_j \\
&= \frac{1}{n} \sum_{j = 1}^{d} \sum_{t = 1}^{n} \pred{X_{i, t} = j} \e_j \\
&\eqdef \widetilde{\mc}_n(i, \cdot),
\end{split}
\end{equation*}
where by definition, $X_{i,1}, X_{i, 2}, \dots, X_{i, n} \sim \mc(i, \cdot)^{\otimes n}$,
and the problem is reduced to learning a distribution out of $n$ independent samples.
Since $\mathbb{E}\nrm{\widetilde{\mc}_n(i, \cdot) - \mc(i, \cdot)}_1 \leq \sqrt{\frac{d}{n}}$, 
(see for example \citet{Berend2013})
and the function $X_{i,1}, X_{i, 2}, \dots, X_{i, n} \mapsto \nrm{\widetilde{\mc}_n(i, \cdot) - \mc(i, \cdot)}_1$ is $(2/n)$-Lipschitz,
an application of McDiarmid's inequality yields that
\begin{equation*}
\begin{split}
\PR{ \nrm{ \widetilde{\mc}(i, \cdot) - \mc(i, \cdot)}_1 > \eps \text{ and } \tilde{N}_i = n} &\leq \PR{ \nrm{ \widetilde{\mc}_n(i, \cdot) - \mc(i, \cdot)}_1 > \eps} \\
&\leq\expo{ -\frac{n}{2} \max \set{ 0, \eps - \sqrt{\frac{d}{n}} }^2 }. \\
\end{split}
\end{equation*}
It follows that
\begin{equation*}
\begin{split}
&\sum_{n = n_i}^{3 n_i}\PR[\bpi]{ \nrm{\estmc(i, \cdot) - \mc(i, \cdot)}_1 > \eps \text{ and } N_i = n } \\
&\stackrel{(i)}{\leq} (2n_i + 1) \expo{ -\frac{n_i}{2} \max \set{ 0, \eps - \sqrt{\frac{d}{n_i}} }^2 } \\
&\stackrel{(ii)}{\leq} (m \pi_i + 1) \expo{ - c m \pi_i \eps^2  }, \\
\end{split}
\end{equation*}
where $c = \frac{(1 - 1/\sqrt{2})^2}{4}$, $(i)$ stems from a monotonicity argument, 
and $(ii)$ is by setting $n_i = \frac{m \pi_i}{2}$, and as long as $m \geq \frac{4d}{ \eps^2 \pimin}$.
We start by handling the first term of \eqref{eq:decomposition},
\begin{equation}
\label{eq:remaining-sum}
\begin{split}
\sum_{i=1}^{d} (m \pi_i + 1) \expo{ - c m \pi_i \eps^2  } &\leq \sum_{i=1}^{d} \frac{1}{c \eps^2} \expo{ - c m \pi_i \eps^2 / 2 } + \sum_{i=1}^{d}  \expo{ - c m \pi_i \eps^2  } \\
&\leq \sum_{i=1}^{d} \frac{2}{c \eps^2} \expo{ - c m \pi_i \eps^2 / 2 } \\
&\leq \frac{2 d}{c \eps^2} \expo{ - c m \pimin \eps^2 / 2 }, \\
\end{split}
\end{equation}
where we used the fact that $x > 0 \implies x \expo{ -x }\leq \expo{-x/2}$, and
which is smaller than $\delta/2$ as long as $m \geq \frac{2}{c \pimin \eps^2} \ln \frac{4d}{c \delta \eps^2}$.
It remains to control the probability of the bad event where the states are not visited a reasonable amount of time. Invoking \citet[Theorem~3.10]{paulin2015concentration},
\begin{equation}
\label{eq:control-visits}
\PR[\bpi]{N_i \notin \left[\frac{1}{2}m \pi_i, \frac{3}{2}m \pi_i\right]} \leq \expo{ -\frac{\pssg \left( \frac{1}{2}m \pi_i \right)^2}{8 (m + 1/\pssg ) \pi_i(1 - \pi_i) + 20 \frac{1}{2}m \pi_i  } }.
\end{equation}
Quantifying the price for non-stationarity using \citet[Proposition~3.14]{paulin2015concentration},
\begin{equation*}
\begin{split}
\PR[\bmu]{\nrm{\estmc - \mc}_\infty > \eps } \leq \nrm{\bmu/\bpi}_{2, \bpi} \sqrt{ \PR[\bpi]{\nrm{\estmc - \mc}_\infty > \eps }},
\end{split}
\end{equation*}
and combining with \eqref{eq:control-visits} yields the upper bound. \hfill\ensuremath{\square}

\begin{remark}
Note that one can derive an upper bound of $\frac{1}{\pimin}\max \set{1/\eps^2,1/\pssg }$ (up to logarithmic factors) for the problem with respect to the max norm 
$$\mnrm{\mc - \estmc} = \max_{(i,j) \in [d]^2} \abs{\mc(i,j) - \estmc(i,j)}.$$ 
Similarly, for $p \in [1, 2)$, we can derive the more general upper bound (up to logarithmic factors)
$$\frac{1}{\pimin} \max \set{ \frac{d^{2/p - 1}}{\eps^2}, \frac{1}{\pssg} }$$ 
for the problem with respect to the norm $\nrm{\mc - \estmc}_{\infty, p} \eqdef \max_{i \in [d]} \nrm{\mc(i, \cdot) - \estmc(i, \cdot)}_p$.
\end{remark}

\subsection{Proof of Theorem~\ref{theorem:instance-specific-upper-bound-finite}}
Sample $X_1, \dots, X_m \sim (\mc, \bmu)$ where $\mc$ is ergodic with stationary distribution $\bpi$.
We define the estimator $\estQ \eqdef \frac{1}{m-1} N_{i j}$ with $N_{i j} \eqdef \sum_{t=1}^{m-1} \pred{X_t = i, X_{t+1} = j}$. 
We first focus on the stationary case where $\bmu = \bpi$.

\paragraph{Bounding the distance in expectation.}

From Jensen's inequality and stationarity,
\begin{equation*}
\begin{split}
\mathbb{E}_{\bpi}\TV{\estQ - \Q} &= \sum_{(i, j) \in \Omega^2}  \mathbb{E}_{\bpi} \abs{\frac{N_{i j}}{m-1} - \bpi(i) \mc(i,j)} \\
&\leq \frac{1}{m-1} \sum_{(i, j) \in \Omega^2}  \sqrt{ \E[\bpi]{\left(N_{i j} - (m-1)\bpi(i) \mc(i,j)\right)^2} }\\
&= \frac{1}{m-1} \sum_{(i, j) \in \Omega^2} \sqrt{ \Var[\bpi]{N_{i j}}},\\
\end{split}
\end{equation*}
and we are left with controlling a variance term. The next lemma defines a new Markov chain from $X_1, \dots, X_m$ with an approximately similar mixing time.
\begin{lemma}
\label{lemma:tuple-chain}
Let $\X = (X_1, \dots, X_m) \sim (\mc, \bpi)$ with mixing time $\tmix$ and stationary distribution $\bpi$, then $\Y = ((X_1, X_2), (X_2, X_3), \dots, (X_{m-1}, X_m) )$ is also a finite state Markov chain with mixing time at most $\tmix + 1$, with kernel $\tildemc: \Omega^2 \times \Omega^2 \to [0, 1]$ and stationary distribution $\tildebpi$, such that for all $ (i,j,k,\ell) \in \Omega^4$,
\begin{equation*}
\begin{split}
\tildemc((i, j), (k, \ell)) &= \pred{k = j} \mc(k, \ell) \\
\tildebpi((i,j)) &= \Q(i,j).
\end{split}
\end{equation*}
\end{lemma}
\begin{proof}
Let $\X = (X_1, \dots, X_m) \sim (\mc, \bpi)$ with mixing time $\tmix$ and stationary distribution $\bpi$, we first show that $\Y = ((X_1, X_2), (X_2, X_3), \dots, (X_{m-1}, X_m) )$ is also a finite state Markov chain. For all $t \in \N$ and $\y = (y_1, \dots, y_{t-1})$ with $y_{s} = (x_{s}, x_{s+1})$ for $s \in [t-2]$, and whenever defined,
\begin{equation*}
\begin{split}
&\PR{Y_t = (i, j) \gn Y_1 = y_1, \dots, Y_{t-1} = y_{t-1}} \\
&= \PR{(X_{t}, X_{t+1}) = (i, j) \gn X_1 = x_1, \dots, X_{t-1} = x_{t-1}, X_{t} = x_{t}} \\
&= \pred{i = x_t} \PR{ X_{t+1} = j \gn X_{t} = x_{t}} = \pred{i = x_t} \mc(i, j) \\
&= \PR{Y_t = (i, j) \gn Y_{t-1} = y_{t-1}}, \\
\end{split}
\end{equation*}
which confirms the Markov property. Additionally, setting $\tildebpi((i,j)) = \Q(i,j)$,
\begin{equation*}
\begin{split}
\sum_{(i,j) \in \Omega^2} \tildebpi((i,j)) \tildemc((i, j), (k, \ell)) &= \sum_{(i,j) \in \Omega^2} \tildebpi((i,j)) \pred{k = j} \mc(k, \ell) = \mc(k, \ell) \sum_{i \in \Omega} \tildebpi((i,k)) \\
& = \mc(k, \ell) \sum_{i  \in \Omega} \bpi(i) \mc(i,k) = \mc(k, \ell) 
\bpi(k) = \tildebpi((k,\ell)).
\end{split}
\end{equation*}
This entails that $\Y$ is a Markov chain $(\tildebpi, \tildemc)$ over the state space $\Omega \times \Omega$, and stationary distribution $\tildebpi$. Let $t \geq \tmix + 1$, and $\tilde{\delta}_{(i_1, j_1)}$ the distribution on $\Omega \times \Omega$ that puts mass 1 at $(i_1, j_1)$ and $0$ everywhere else, then
\begin{equation*}
\begin{split}
\tv{\tilde \delta_{(i_1, j_1)} \tildemc^{t} - \tildebpi} &= \sum_{(i, j) \in \Omega^2} \abs{\tilde \delta_{(i_1, j_1)} \tildemc^{t}(i,j) - \tildebpi(i,j)} .
\end{split}
\end{equation*}
One one hand,
\begin{equation*}
\begin{split}
\tilde \delta_{(i_1, j_1)} \tildemc^{t}(i,j) &= \PR{Y_{t+1} = (i,j) \gn Y_1 = (i_1, j_1)} \\
&= \PR{(X_{t+1}, X_{t+2}) = (i,j) \gn (X_1, X_2) = (i_1, j_1)} \\
&= \PR{X_{t+2} = j \gn X_{t+1} = i} \PR{X_{t+1} = i \gn X_2 = j_1} \\
&= \mc(i,j) \mc^{t-1}(j_1, i), \\
\end{split}
\end{equation*}
so that
\begin{equation*}
\begin{split}
\tv{\tilde \delta_{(i_1, j_1)} \tildemc^{t} - \tildebpi} &= \frac{1}{2} \sum_{(i, j) \in \Omega^2} \abs{\mc(i,j) \mc^{t-1}(j_1, i) - \mc(i,j) \bpi(i)} \\
&= \frac{1}{2} \sum_{i \in \Omega} \abs{\mc^{t-1}(j_1, i) - \bpi(i)} = \tv{ \delta_{j_1} \mc^{t-1} - \bpi} \leq 1/4, \\
\end{split}
\end{equation*}
by definition of $\tmix$, and by the condition on $t$.
\end{proof}

\begin{corollary}
\label{corollary:control-variance-empirical-transition-counts}
Let $X_1, \dots, X_m \sim (\mc, \bpi)$ and define the chain $\Y \sim (\tildemc, \tildebpi)$ from Lemma~\ref{lemma:tuple-chain}, then by \citet[Theorem~3.2, Proposition~3.4]{paulin2015concentration}, for $\phi: \Omega^2 \to \R^+$,
\begin{equation*}
\begin{split}
\Var[\tilde{\bpi}]{\sum_{t=1}^{m - 1} \phi(Y_t)} &\leq \frac{4 m}{\pssg} \Var[\tilde{\bpi}]{\phi} \leq 8 m \tmix \Var[\tilde{\bpi}]{\phi}, \\
\end{split}
\end{equation*}
and from the fact that
\begin{equation*}
\begin{split}
\Var[\bpi]{\pred{X_t = i, X_{t+1} = j}} = \bpi(i)\mc(i,j)(1 - \bpi(i)\mc(i,j)),
\end{split}
\end{equation*}
we get the following control on the variance term:
\begin{equation*}
\begin{split}
\Var[\bpi]{N_{i j}} \leq 8 m \tmix \bpi(i)\mc(i,j)(1 - \bpi(i)\mc(i,j)).
\end{split}
\end{equation*}
\end{corollary}

\noindent As a consequence of Corollary~\ref{corollary:control-variance-empirical-transition-counts},
\begin{equation*}
\begin{split}
\mathbb{E}_{\bpi}\TV{\estQ - \Q} &\leq  2 \sqrt{\frac{\tmix + 1}{m - 1} \TV{\Q}_{1/2}}, \text{ where } \TV{\Q}_{1/2} \eqdef \left( \sum_{(i,j) \in \Omega^2} \sqrt{\Q(i,j) } \right)^2
\\
\end{split}
\end{equation*}
and
\begin{equation*}
\begin{split}
m \geq 64 \frac{\TV{\Q}_{1/2} \tmix}{ \eps^2} \implies \mathbb{E}_{\bpi}\TV{\estQ - \Q} \leq \eps/2.
\end{split}
\end{equation*}

\paragraph{Bounding the fluctuations around the expectation.}

The strategy is to show that the loss is $(4/(m-1))$-Lipschitz, and simply invoke McDiarmid's inequality for Markov chains \citep[Corollary~2.10]{paulin2015concentration}. \\\\
For $\x = (x_1, \dots, x_m) \in \R^m$, let $\x^{(k)} = (x_1, \dots, x_{k-1}, x_k', x_{k+1}, \dots, x_m)$, with $k \in [m]$,
\begin{equation*}
\begin{split}
&\abs{\TV{\estQ(\x) - \Q} - \TV{\estQ(\x') - \Q}} \\
&= \frac{1}{m-1} \abs{ \sum_{(i,j) \in \Omega^2} \left( \abs{n_{i j} - (m-1)\Q(i,j)} - \abs{n^{(k)}_{i j} - (m-1)\Q(i,j)} \right) } \\
&\leq \frac{1}{m-1}  \sum_{(i,j) \in \Omega^2} \abs{ \abs{n_{i j} - (m-1) \Q(i,j)} - \abs{n^{(k)}_{i j} - (m-1) \Q(i,j)} } \\
&\leq \frac{1}{m-1}  \sum_{(i,j) \in \Omega^2} \abs{ n_{i j} - n^{(k)}_{i j} },
\end{split}
\end{equation*}
where we successively invoked the forward and reverse triangle inequality. We then compute
\begin{equation*}
\begin{split}
\sum_{(i,j) \in \Omega^2} \abs{ n_{i j} - n^{(k)}_{i j} } = & \sum_{(i,j) \in \Omega^2} \bigg| \pred{x_k = i, x_{k+1} = j} - \pred{x_k^{(k)} = i, x_{k+1}^{(k)} = j} + \\
&  \pred{x_{k-1} = i,x_{k} = j} - \pred{x_{k-1}^{(k)} = i,x_k^{(k)} = j} \bigg| \\
\leq &\sum_{(i,j) \in \Omega^2} \bigg( \pred{x_k = i,x_{k+1} = j} + \pred{x_k' = i,x_{k+1} = j} + \\
& \pred{x_{k-1} = i,x_{k} = j} + \pred{x_{k-1} = i,x_k' = j} \bigg) \leq 4,
\end{split}
\end{equation*}
so that $\abs{\TV{\estQ(\x) - \Q} - \TV{\estQ(\x') - \Q}} \leq \frac{4}{m-1}$, and it is then a consequence of McDiarmid's inequality for Markov chains \citep[Corollary~2.10]{paulin2015concentration} that
\begin{equation*}
\begin{split}
\PR[\bpi]{\abs{ \TV{\estQ(\X) - \Q} - \mathbb{E}_{\bpi}\TV{\estQ(\X) - \Q} } > \eps/2} &\leq 2 \expo{-\frac{m \eps^2}{c \tmix }}, c \in \R_+\\
\end{split}
\end{equation*}
and $m \geq \frac{c \tmix}{\eps^2} \ln \left( \frac{2}{\delta} \right) \implies \PR[\bpi]{\abs{ \TV{\estQ - \Q} - \mathbf{\bpi} \TV{\estQ(\x) - \Q} } > \eps/2} \leq \delta$. 
Finally, we extend the study to non-stationary chains in a straightforward way as for the proof of Theorem~\ref{sec:proof-ub-finite}, with \citet[Proposition~3.10]{paulin2015concentration}, which yields the final theorem. \hfill\ensuremath{\square}

\subsection{Proof of Theorem~\ref{thm:learn-lb} (part 1): \texorpdfstring{$ \frac{d}{\eps^2 \pimin} $}{precision lower bound}}
Recall the definition of KL divergence between two distributions $(\boldsymbol{\nu}, \boldsymbol{\theta}) \in \Delta_\Omega^2$, such that $\boldsymbol{\nu} \ll \boldsymbol{\theta}$,
\begin{equation*}
\kl{\boldsymbol{\nu}}{\boldsymbol{\theta}} \eqdef \sum_{i \in \Omega} \boldsymbol{\nu}(i) \ln \frac{\boldsymbol{\nu}(i)}{\boldsymbol{\theta}(i)}.
\end{equation*}
Let $\eps \in (0, 1/32)$, and $\M_{d, \pssg, \pimin}$ be the collection of all $d$-state Markov chains whose stationary distribution is minorized by $\pimin$ and whose pseudo-spectral gap is at least $\pssg$. The quantity we wish to lower bound is the minimax risk for the estimation problem :
\begin{equation}
\label{eq:mmrisk}
\begin{split}
\mmrisk &= \inf_{\estmc} \sup_{\mc} \probm{\mc}{\nrm{\mc - \estmc}_\infty > \eps},
\end{split}
\end{equation}
where the $\inf$ is taken over all estimation procedures $\estmc: [d]^m \to \mathcal{M}_d, \X \mapsto \estmc(\X)$ and the $\sup$ over $\M_{d,\pssg,\pimin}$. Suppose for simplicity of the analysis that we consider Markov chains of $d+1$ states instead of $d$, and that $d$ is even. A slight modification of the proofs covers the odd case. We define the following class of Markov chains parametrized by a given distribution $\boldsymbol{p} \in \Delta_{d+1}$, where the conditional distribution defined at each state of the chain is always $\boldsymbol{p}$ with $p_{d+1} = p_\star$ and $p_k = \frac{1 - p_\star}{d}$ for $k \in [d]$, with $p_\star < \frac{1}{d+2}$, except for state $d+1$, where it is only required that it has a loop of probability $p_\star$ to itself.
\begin{equation}
\label{definition:G-p}
\Gclass_{\boldsymbol{p}} = \set{
\mc_{\etab}
= \begin{pmatrix}
 p_1 & \hdots & p_d & p_\star \\
  \vdots & \vdots & \vdots & \vdots\\
  p_1 & \hdots & p_d & p_\star \\
 \eta_1 & \hdots & \eta_d & p_\star \\
\end{pmatrix} :
\etab = (\eta_1, \dots, \eta_d, p_\star) \in \Delta_{d+1}
}.
\end{equation}
Remark: a family of Markov chains very similar to $\Gclass_{\boldsymbol{p}}$ was independently considered by \citet{haoorlitsky2018} for proving their lower bound.

\noindent It is easy to see that the stationary distribution $\bpi$ of an element of $\Gclass_{\boldsymbol{p}}$ indexed by $\etab$ is
\begin{equation*}
\pi_k = \frac{(1 - p_\star)^2}{d} + \eta_k p_\star, \text{ for } k \in [d], \qquad \pi_{d+1} = p_\star.
\end{equation*}
For $m \geq 4$, $\etab =(\eta_1, \dots, \eta_{d}, p_\star) \in \Delta_{d+1}$ and $(X_1, \dots, X_m) \sim (\mc_{\etab}, \boldsymbol{p})$, set 
$N_i = \sum_{t=1}^{m} \pred{X_t = i}$
the number of visits to the $i$th state. Focusing on the $(d+1)$th state, since
for $ i \in [d+1]$, we have $\mc_{\etab}(i, d+1) = p_\star$,
it is immediate that $N_{d+1} \sim \Binomial(m, p_\star)$.
Introduce the subset of Markov chains in $\Gclass_{\boldsymbol{p}}$ such that 
$$\etab(\boldsymbol{\sigma}) = \left( \frac{1 - p_\star + 16 \sigma_1 \eps}{d}, \frac{1 - p_\star - 16 \sigma_1 \eps}{d}, \dots, \frac{1 - p_\star + 16 \sigma_{\frac{d}{2}} \eps}{d}, \frac{1 - p_\star - 16 \sigma_{\frac{d}{2}} \eps}{d} , p_\star \right),$$ where $\boldsymbol{\sigma} = \left( \sigma_1, \dots, \sigma_{\frac{d}{2}} \right) \in \set{-1, 1}^{\frac{d}{2}}$. Also define $\mc_0$ with $\etab_0 = \left( \frac{1 - p_\star}{d}, \dots , \frac{1 - p_\star}{d} , p_\star \right)$. 
We start by showing that
for
any chain of this family,
$\pssg$ is bounded from below by a universal constant. 
The Dobrushin coefficient $\kappa$ [defined at \eqref{eq:dobr}]
verifies
$$\kappa(\mc_{\boldsymbol{\sigma}}) = \tv{\etab(\boldsymbol{\sigma}) - \etab_0} = 8 \eps \leq 1/2.$$
From the Bubley-Dyer path coupling method \citep{bubley1997path}, $\tmix \leq \frac{\ln{1/4}}{\ln{(1 - \kappa)}} \leq 2$, such that
combining with \citet[Proposition~3.4]{paulin2015concentration}, $\pssg \geq \frac{1}{2 \tmix} \geq \frac{1}{4}$.
A direct computation yields that for $\boldsymbol{\sigma} \neq \boldsymbol{\sigma}'$, $\nrm{\mc_{\boldsymbol{\sigma}} - \mc_{\boldsymbol{\sigma'}}}_1 = \frac{32 \eps}{d} d_H(\boldsymbol{\sigma}, \boldsymbol{\sigma}')$, where $d_H$ is the Hamming distance.
From the Varshamov-Gilbert lemma, we know that
there is a
$ \Sigma \subset \set{-1,1}^{d/2}$, $\abs{\Sigma} \geq 2^{d/16}$, such that
for $(\boldsymbol{\sigma}, \boldsymbol{\sigma}') \in \Sigma$ with $\boldsymbol{\sigma} \neq \boldsymbol{\sigma}'$,
we have
$d_H(\boldsymbol{\sigma},  \boldsymbol{\sigma}') \geq \frac{d}{16}$. Restricting our problem to this set $\Sigma$,
and finally noticing that for $ \boldsymbol{\sigma} \in \Sigma$ we have $\nrm{\mc_{\boldsymbol{\sigma}} - \mc_{0}}_1 = 16 \eps > 2 \eps$,
applying Tsybakov's method \citep[Theorem~2.5]{tsybakov2009introduction} to our problem, we obtain
\begin{equation*}\begin{split}
\mathcal{R}_m &\geq \frac{1}{2} \left( 1 - \cfrac{\frac{4}{2^{\frac{d}{16}}} \sum_{\boldsymbol{\sigma} \in \Sigma} \mathcal{D}_m}{\ln{2^{\frac{d}{16}}}} \right), \\
\end{split}\end{equation*}
where $\mathcal{D}_m$ denotes the KL divergence between the two distributions of words of length $m$
(see formal definition at Lemma~\ref{lemma:tensorization-kl}) from each of the Markov chains indexed by $\etab(\boldsymbol{\sigma})$ and $\etab_0$. Leveraging the chain rule for the KL divergence, and as by construction, the only discrepancy occurs when visiting the $(d+1)$th state, Lemma~\ref{lemma:tensorization-kl} shows the following tensorization property,
\begin{equation}
\label{eq:tensor}
\mathcal{D}_m \leq p_\star m \kl{\etab(\boldsymbol{\sigma})}{\etab_0},
\end{equation}
following up with a straightforward computation,
\begin{equation}
\label{eq:kl-eta}
\begin{split}
\kl{\etab(\boldsymbol{\sigma})}{\etab_0} &= \sum_{s \in \set{-1, +1}} \frac{d}{2} \left( \frac{1 - p_\star +  16 s \eps}{d} \right) \ln\left( \frac{\frac{1 - p_\star + 16 s \eps}{d}}{\frac{1 - p_\star}{d}} \right) \leq 128 \eps^2,
\end{split}\end{equation}
and finally combining \eqref{eq:mmrisk}, \eqref{eq:tensor} and \eqref{eq:kl-eta}, we get $\mathcal{R}_m \geq \frac{1}{2} \left( 1 - \cfrac{512 \eps^2 m p_\star}{\frac{d}{16} \ln{2}} \right)$. Further noticing that for the considered range of $\eps$ and for $p_\star < \frac{1}{d + 2}$, it is always the case that $\pimin = p_\star$, so that for $m \leq \frac{d (1 - 2 \delta) \ln {2}}{8192 \eps^2 \pimin}, \mmrisk \geq \delta$. \hfill\ensuremath{\square}

\begin{lemma}
\label{lemma:tensorization-kl}
For two Markov chains $\mc_{1}$ and $\mc_{2}$ of the class $\Gclass_{\boldsymbol{p}}$ defined at \eqref{definition:G-p} indexed respectively by $\etab_1$ and $\etab_2$, 
denote respectively by $\law_1(\X_1^m)$ and $\law_2(\X_2^m)$ the distributions of words of length $m$, and write for simplicity $\mathcal{D}_t = \kl{\law_1(\X_1^t)}{\law_2(\X_2^t)}$ the KL divergence between the processes up to time $t$. Then it is a fact that
\begin{equation*}
\mathcal{D}_m = (m-1) p_\star\kl{\etab_1}{\etab_2}.
\end{equation*}

\begin{proof}
From an application of the chain rule for the KL divergence, followed by the Markov property,
\begin{equation*}
\mathcal{D}_m = \mathcal{D}_{m-1} + \mathcal{E}_m, \\
\end{equation*}
where 
\begin{equation*}
\begin{split}
\mathcal{E}_m &\eqdef \E[\X_1^{m-1} \sim \law_1]{ \mathcal{D}_{m \gn m-1} }, \\
\mathcal{D}_{m \gn m-1} &\eqdef  \kl{\law_1(X_m \gn X_{m-1} )}{\law_2(X_m \gn X_{m-1} )}.
\end{split}
\end{equation*}
In the event where $X_{m-1} \neq d + 1$, $\mathcal{D}_{m \gn m-1} = 0$, such that from the law of total expectation,
\begin{equation}
\label{eq:expectation-term}
\mathcal{E}_m = \E[\X_1^{m-2} \sim \law_1]{ \E{ \mathcal{D}_{m \gn m-1}  \gn X_{m-1} = d + 1} \law_1( X_{m-1} = d + 1 \gn \X_1^{m-2}) }.
\end{equation}
From a second application of the Markov property, and by structural property of the chain,
\begin{equation}
\label{eq:using-structural-property}
\law_1( X_{m-1} = d + 1 \gn \X_1^{m-2}) = \law_1( X_{m-1} = d + 1 \gn X_{m-2}) = \mc_{1}(X_{m-2}, d + 1) = p_\star,
\end{equation}
while in the event where $X_{m-1} = d + 1$,
\begin{equation}
\label{eq:kl-conditioned-on-last-visited}
\mathcal{D}_{m \gn m-1} = \kl{\etab_1}{\etab_2}.
\end{equation}
Combining \eqref{eq:expectation-term} with \eqref{eq:using-structural-property} and \eqref{eq:kl-conditioned-on-last-visited} yields,
\begin{equation*}
\mathcal{E}_m = p_\star\kl{\etab_1}{\etab_2}.
\end{equation*}
From an inductive argument, and the base case $\mathcal{D}_1 = 0$,
\begin{equation*}
\mathcal{D}_m = (m-1) p_\star\kl{\etab_1}{\etab_2}.\\
\end{equation*}
\end{proof}
\end{lemma}

\subsection{Proof of Theorem~\ref{thm:learn-lb} (part 2): \texorpdfstring{$ \frac{d \ln {d}}{\pssg} $}{mixing lower bound}}
We treat $\eps \in (0, 1/8)$ and
$d=6k$, $k\ge2$ as fixed.
For $\eta\in(0,1/48)$
and $\btau\in\set{0,1}^{d/3}$,
define
the
block matrix
\beq
\mc_{\eta,\btau}
  = 
\begin{pmatrix}
\boldsymbol{C}_{\eta} & \boldsymbol{R}_{\btau} \\
\boldsymbol{R}_{\btau} \trn & \boldsymbol{L}_{\btau}
\end{pmatrix}
\eeq
, where
$\boldsymbol{C}_{\eta} \in \R^{d/3\times d/3}$,
$  \boldsymbol{L}_{\btau}  \in\R^{2d/3\times2d/3}$,
and
$\boldsymbol{R}_{\btau} \in\R^{d/3\times 2d/3}$
are given by
$$
  \boldsymbol{L}_{\btau} = \frac{1}{8} \diag\left( 7 - 4 \tau_1 \eps, 7 + 4 \tau_1 \eps, \dots, 7 - 4 \tau_{d/3} \eps, 7 + 4 \tau_{d/3} \eps  \right)
,$$
\beq
\boldsymbol{C}_\eta
  =
\begin{pmatrix}
\frac{3}{4} - \eta & \frac{\eta}{d/3 - 1} & \hdots & \frac{\eta}{d/3 - 1} \\
\frac{\eta}{d/3 - 1} & \frac{3}{4} - \eta & \ddots & \vdots \\
\vdots & \ddots & \ddots & \frac{\eta}{d/3 - 1} \\
\frac{\eta}{d/3 - 1} & \hdots & \frac{\eta}{d/3 - 1} & \frac{3}{4} - \eta \\
\end{pmatrix}
,
\eeq
\beq
\boldsymbol{R}_{\btau} = \frac{1}{8}
\begin{pmatrix}
1 + 4 \tau_1 \eps & 1 - 4 \tau_1 \eps  & 0 & \hdots & \hdots & \hdots & 0 \\
0 & 0 & 1 + 4 \tau_2 \eps & 1- 4 \tau_2 \eps  & 0 & \hdots & 0 \\
\vdots & \vdots & \vdots & \vdots & \vdots & \vdots & \vdots \\
0 & \hdots & \hdots & \hdots & 0 & 1 + 4 \tau_{d/3} \eps & 1- 4 \tau_{d/3} \eps \\
\end{pmatrix}
.
\eeq
Holding $\eta$ fixed, define the collection
\beqn
\label{eq:Heta}
\H_\eta=\set{\mc_{\eta,\btau}:\btau\in\set{0,1}^{d/3}}
\eeqn
of Markov matrices.
Denote by $\mc_{\eta,\zero}\in\H_\eta$
the element corresponding to $\btau=\zero$.
Note that every $\mc\in\H_{\eta}$
is
ergodic
and
reversible,
and its unique stationary distribution is uniform. 
A graphical illustration
of this class of
Markov chains is provided in Figure~\ref{fig:contracting-mc-topology};
in particular,
every
$\mc\in\H_{\eta}$
consists of an ``inner clique''
(i.e., the states indexed by $\set{1,\ldots,d/3}$)
and ``outer rim''
(i.e., the states indexed by $\set{d/3+1,\ldots,d}$).

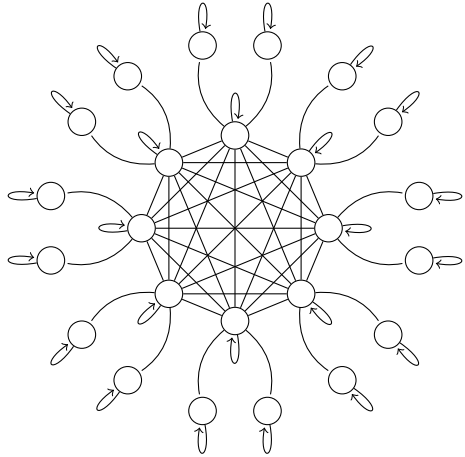
\begin{figure}[H]
\begin{center}
\begin{tikzpicture}[auto]
\small

\def \d {8}
\def \gap {10}
\def \radius {35pt}

\foreach \c in {1,...,\d}{

	\pgfmathparse{(\c - 1) * 360 / \d + 90}
	\node[draw,circle] (\c) at (\pgfmathresult: \radius) {};
	\pgfmathsetmacro\angleleft{(\c - 1) * 360 / \d + 90 - \gap}
	\pgfmathsetmacro\angleright{(\c - 1) * 360 / \d + 90 + \gap}
	\pgfmathtruncatemacro\idleft{\d + \c}
	\pgfmathtruncatemacro\idright{\d + \d + \c}
	\node[draw,circle] (\idleft) at (\angleleft: 70pt) {};
	\node[draw,circle] (\idright) at (\angleright: 70pt) {};

	\draw[every loop]
			(\c) edge[-, bend right, swap] node {} (\idleft)
			(\c) edge[-, bend left] node {} (\idright)
			(\idleft) edge[-,in=\angleleft,out=\angleright, loop] node {} (\idleft)
			(\idright) edge[-,in=\angleleft,out=\angleright, loop] node {} (\idright)
			(\c) edge[-,in=\angleleft,out=\angleright, loop] node {} (\c)
			;

}    

\foreach \c in {1,...,\d}{
	\foreach \e in {\c,...,\d}{
		    \ifthenelse{\c = \e}{}{\draw[] (\c) edge[-] node {} (\e);}
	}
}
\end{tikzpicture}
\end{center}
\caption{Generic topology of the $\H_{\eta}$ Markov chain class:
  every chain consists of an ``inner clique'' and an ``outer rim''.}
\label{fig:contracting-mc-topology}
\end{figure}

Lemma~\ref{lemma:control-spectral-gap} in the Appendix
establishes a key property of
the elements of $\H_\eta$: each $\mc$ in this class satisfies
$$\eta/4 \leq \pssg \leq \eta.$$
Suppose that $\X=(X_1,\ldots, X_m) \sim(\mc_{\etab},\bpi)$,
where $\mc\in\H_{\eta}$ and $\bpi$ is uniform.
Define
the random variable $T\cliq$,
to be the first time all of the states in the inner clique were visited,
\beqn
\label{eq:Tcliq}
T\cliq =
\inf\set{t\ge1:
  \abs{\set{X_1,\ldots,X_t}\cap[d/3]}
  =d/3},
\eeqn
Lemma~\ref{lemma:half-cover} in the Appendix
 gives a lower estimate on this quantity:
$$m \leq \frac{d}{20 \eta} \ln{\left( \frac{d}{3} \right)} \implies \PR{\tall > m} \geq \frac{1}{5} .$$
Let $\M_{d,\pssg,\pimin}$
be the collection of all $d$-state Markov chains whose stationary distribution is minorized by
$\pimin$ and whose pseudo-spectral gap is at least $\pssg$.
Writing $\X = (X_1, \dots, X_m)$, recall that the quantity we wish to lower bound is the minimax risk for the statistical estimation problem
(it will be convenient to write $\eps/2$ instead of $\eps$, which only affects the constants):
\begin{equation*}
\begin{split}
\mmrisk &= \inf_{\estmc} \sup_{\mc} \probm{\mc}{\nrm{\mc - \estmc}_\infty > \frac{\eps}{2}},
\end{split}
\end{equation*}
where the $\inf$ is taken over all
estimation procedures $\estmc: \Omega^m \to \M_d, \X \mapsto \estmc(\X)$,
and the $\sup$ over $\M_{d,\pssg,\pimin}$.
We employ the
general reduction scheme
of
\citet[Chapter~2.2]{tsybakov2009introduction}.
The first step is to restrict the $\sup$ to
the finite subset $\H_{\eta} \subsetneq \M_{d,\pssg,\pimin}$.

\begin{equation*}
\begin{split}
  \mmrisk &\geq \inf_{\estmc} \max_{\btau} \probm{\mc_{\eta, \btau}}{\nrm{\mc_{\eta, \btau} - \estmc}_\infty > \frac{\eps}{2}}
  .
\end{split}
\end{equation*}
Define
$\tall$
as in
(\ref{eq:Tcliq}).
Then
\begin{equation*}
\begin{split}
  \mmrisk &\geq \inf_{\estmc} \max_{\btau} \probm{\mc_{\eta, \btau}}{\nrm{\mc_{\eta, \btau} - \estmc}_\infty > \eps \gn \tall > m}
  \probm{\mc_{\eta, \btau}}{\tall > m}
\end{split}
\end{equation*}
and
Lemma~\ref{lemma:half-cover} implies that
for $m < \frac{d}{20 \eta} \ln{\left(\frac{d}{3}\right)}$,
\begin{equation*}
\begin{split}
  \mmrisk &\geq \frac{1}{5} \inf_{\estmc} \max_{\btau}
  \probm{\mc_{\eta, \btau}}{\nrm{\mc_{\eta, \btau} - \estmc}_\infty > \eps \gn \tall > m}
  .
\end{split}
\end{equation*}
Observe that
all $\btau\neq\btau'\in
\set{0, 1}^{d/3}
$
verify
$\nrm{\mc_{\eta, \btau} - \mc_{\eta, \btau'}}_\infty = \eps$.
For any
estimate
$\estmc$,
define
$$\btau^\star = \argmin_{\btau} \nrm{\estmc - \mc_{\eta, \btau}}_\infty.$$
Then for $\btau\neq\btau^\star$, we have
\begin{equation*}
\begin{split}
\eps  &= \nrm{ \mc_{\eta, \btau} - \mc_{\eta, \btau^\star} }_\infty \leq \nrm{\mc_{\eta, \btau} - \estmc }_\infty + \nrm{\estmc - \mc_{\eta, \btau^\star}}_\infty \leq 2  \nrm{\mc_{\eta, \btau} - \estmc }_\infty,
\end{split}
\end{equation*}
whence
$\set{ \btau^\star \neq \btau } \subset \set{ \nrm{ \mc_{\eta, \btau} - \estmc }_\infty > \eps/2 }$
and
\begin{equation*}
\begin{split}
  \mmrisk &\geq
  \frac{1}{5} \inf_{\estmc} \max_{\btau} \probm{\mc_{\eta, \btau}}{ \btau^\star \neq \btau \gn \tall > m} \\
	&=
  \frac{1}{5} \inf_{
\estbtau: \X \mapsto \set{0, 1}^{d/3}
  } \sup_{\btau}
   \probm{\mc_{\eta, \btau}}{ \estbtau \neq \btau \gn \tall > m}
  .
\end{split}
\end{equation*}
Since $\tall>m$
implies that
$N_{i}=0$ for some $i \in [d/3]$,
\begin{equation*}
\begin{split}
  \mmrisk &\geq \frac{1}{5} \inf_{\estbtau} \sup_{\btau} \probm{\mc_{\eta, \btau}}
          { \esttau_{i} \neq \tau_{i} \gn N_{i} = 0}.
\end{split}
\end{equation*}
There are as many
$\mc\in\H_{\eta}$
with
$\tau_{i} = 0$
as those with
$\tau_{i} = 1$,
so if $\mc$ is drawn uniformly at random
and state $i$ has not been visited,
one can do no better than to make a random choice of $\esttau_{i}$
(where $\estbtau$ determines $\estmc$).
More formally,
writing
$\btau^{(i)}= (\tau_1, \dots, \tau_{i-1}, \tau_{i+1}, \dots, \tau_{d/3})
\in\set{0,1}^{d/3-1}
$,
the $\btau$ vector without its $i$th coordinate,
we can
employ an
Assouad-type of decomposition \citep{assouad1983deux, yu1997assouad}:
\begin{equation*}
\begin{split}
  \mmrisk &\geq \frac{1}{5} \inf_{\estbtau} 2^{1-d/3} \sum_{\btau^{(i)} \in \set{0, 1}^{d/3 - 1}} \bigg[ \frac{1}{2} \probm{\tau_i = 0}{ \esttau_i \neq \tau_i \gn N_i = 0}
    + \frac{1}{2} \probm{\tau_i = 1}{ \esttau_i \neq \tau_i \gn N_i = 0} \bigg] \\
  &= \frac{2^{1-d/3}}{10} \sum_{\btau^{(i)} \in \set{0, 1}^{d/3 - 1}} \inf_{\estbtau} \left[ \probm{\tau_i = 0}{ \esttau_i = 1 \gn N_i = 0} +
    \probm{\tau_i = 1}{ \esttau_i = 0 \gn N_i = 0} \right] \\
&= \frac{2^{1-d/3}}{10} \sum_{\btau^{(i)} \in \set{0, 1}^{d/3 - 1}} \left[ 1 - \tv{ \probm{\tau_i = 0}{ \X = \cdot \gn N_i = 0} +  \probm{\tau_i = 1}{ \X = \cdot \gn N_i = 0} } \right] \\
&= \frac{1}{10}.
\end{split}
\end{equation*}
Combined with
Lemma~\ref{lemma:control-spectral-gap},
and inclusion of events,
this implies lower bound of
$\frac{d}{\pssg} \ln{d}$
for the estimation problem,
which is tight for the case
$\pimin = \frac{1}{d}$.\hfill\ensuremath{\square}

\begin{remark}
  \label{rem:HSK}
  Let us compare construction $\H_\eta$ to the family of Markov chains
  employed in the lower bound of \citet{hsu2019}:
\begin{equation*}
\begin{split}
\mc(i,j) = 
\begin{cases}
1 - \eta_i,& i = j \\
\frac{\eta_i}{d-1},& \text{else}
\end{cases}
,
\end{split}
\end{equation*}
where $\eta_i \in \{\eta, \eta'\}$ with $\eta' \approx \eta/2$.
For our lower bound, $\H'_\eta$ has to be a $\eps$-separated set under $\nrm{\cdot}_\infty$.
In the construction of \citeauthor{hsu2019}, the spectral gap $\sg$ and the separation distance $\eps$ are coupled,
and using their family of Markov chains would lead to a lower bound of order $d/\sg\approx d/\eps$,
which is inferior to
$\frac{d}{\eps^2 \pimin}$.
The free parameter $\eta$ was
key to our
construction, which enabled us to decouple $\sg$ from $\eps$.
\end{remark}

\begin{lemma}
\label{lemma:control-spectral-gap}
Let $\eps \in (0, 1/32)$ and $\eta \in (0, 1/48)$. For all $\mc \in \H_{\eta}$
[defined in (\ref{eq:Heta})],
we have
  $\eta/4 \leq \asg \leq \pssg \leq \eta$.
\end{lemma}
\begin{proof}
  We focus our proof on the absolute spectral gap, and will later show that the pseudo spectral gap is of the same order for our class of Markov matrices.
  A lower bound for $\asg$ of the unperturbed chain $\mc_{\eta,\zero}$, 
	is given by Lemma~\ref{lemma:exact-eigenvalues}.
We now show how to extend to general $\btau$ with comparison techniques.
It is well known that (see i.e. \citet[Lemma~13.7]{levin2009markov}) that for a reversible chain $\mc$,
\begin{equation}
\label{eq:definition-sg-variational}
\sg(\mc) = \min_{\substack{f: [d] \to \R \\ f \perp_{\bpi} \unit, \nrm{f}_2 = 1 }} \mathcal{E}_{\mc}(f)
\end{equation}
where
\begin{equation*}
\mathcal{E}_{\mc}(f) \eqdef \frac{1}{2} \sum_{(i,j) \in [d]^2} (f(i) - f(j))^2 \bpi(i) \mc(i, j)
\end{equation*}
is the Dirichlet form associated to $\mc$ with stationary distribution $\bpi$. We now use this variational definition to control the spectral gap of the perturbed chains $\mc_{\eta, \btau}$ in terms of the one of $\mc_{\eta, \zero}$, relying on the fact that for both chains, the stationary distribution is uniform. Comparing transition matrices,
\begin{equation*}
\begin{split}
\mathcal{E}_{\mc_{\eta, \btau}}(f) &= \frac{1}{2} \sum_{(i,j) \in [d]^2} (f(i) - f(j))^2 \bpi(i) \mc_{\eta, \btau}(i, j) \\
&\geq \frac{1}{2} \sum_{(i,j) \in [d]^2} (f(i) - f(j))^2 (1/d) (1 - 4 \eps) \mc_{\eta, \zero}(i, j) \\
&= (1 - 4 \eps) \mathcal{E}_{\mc_{\eta, \zero}}(f),
\end{split}
\end{equation*}
and by the definition at \eqref{eq:definition-sg-variational}, $\sg(\mc_{\eta, \btau}) \geq (1 - 4\eps) \sg(\mc_{\eta, \zero})$.

\paragraph{Extension to $\pssg$.}

Now note that for a
symmetric and lazy
$\mc$,
$\bpi$ is the uniform
distribution,
$\mc \rev = \mc\trn = \mc$,
and
$\pssg = \max_{k \geq 1} \set{ \frac{\sg(\mc^{2k})}{k}}$.
Denoting by $1 = \lambda_1 > \lambda_2 \geq \dots \geq \lambda_d$ the eigenvalues of $\mc$,
we have that
for all $i \in [d]$ and $k \geq 1$,
$\lambda_i^{2k}$ is an eigenvalue for $\mc^{2k}$,
and furthermore
$1 = \lambda_1^{2k} > \lambda_2^{2k} \geq \dots \geq \lambda_d^{2k}$.
Then
\beq
\pssg = \max_{k \geq 1} { \frac{1 - \lambda_2^{2k}}{k}}= 1 - \lambda_2^{2}
\eeq
--- that is, the maximum is achieved at $k=1$.
Indeed,
$1 - \lambda_2^{2k} =  (1 - \lambda_2^2)\left( \sum_{i = 0}^{k-1} \lambda_2^{2i} \right)$
and the latter sum is at most $k$ since 
$\lambda_2^2 < \lambda_2 <1$.
As a result, $\pssg = 1 - \lambda_2^2 = 1 - (1 - \sg)^2 = \sg(2 -  \sg)$ and
\begin{equation*}
\begin{split}
\sg \leq \pssg \leq 2\sg,
\end{split}
\end{equation*}
which completes the proof.
\end{proof}
\begin{lemma}[Cover time]
  \label{lemma:half-cover}
  For
  $\mc \in \H_{\eta}
  $
  [defined in (\ref{eq:Heta})],
  the random variable
$T\cliq$
  [defined in (\ref{eq:Tcliq})]
satisfies
\begin{equation*}
\begin{split}
m \leq \frac{d}{20 \eta} \ln{\left( \frac{d}{3} \right)} &\implies \PR{\tall > m} \geq \frac{1}{5} \\
\end{split}
\end{equation*}
\end{lemma}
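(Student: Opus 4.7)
The plan is to stochastically dominate $T\cliq$ from below by a coupon-collector style sum of $d/3 - 1$ independent geometric random variables, then apply Chebyshev's inequality. The starting point is the topological observation that, once the walk has visited at least one inner-clique state, no new inner-clique vertex can be discovered from an outer-rim state: each outer state $\ell$ transitions to the inner clique only via its unique linked inner state $i_\ell$ (by the structure of $R_\btau^\top$), and since $\ell$ is itself reachable from the inner clique only through $i_\ell$, the state $i_\ell$ must have already been visited before the walk first arrived at $\ell$. Hence, after the first inner-clique visit, every progress event toward covering the inner clique must occur at a step where $X_{t-1}$ lies in the inner clique, and by the off-diagonal structure of $C_\eta$, the conditional probability of progress is at most $\eta(d/3 - k)/(d/3 - 1)$ when $k$ inner states have been visited so far.

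Let $\thit{k}$ denote the first time that the $k$-th distinct inner-clique state is visited, so $T\cliq = \thit{d/3}$. Applying the strong Markov property at each $\thit{k-1}$ shows that $\thit{k} - \thit{k-1}$ conditionally stochastically dominates $\Geometric(\eta(d/3-k+1)/(d/3-1))$, and chaining these dominations via a standard coupling (feeding fresh $\unif[0,1]$ variates into a thinned Bernoulli process) yields
\[
T\cliq \;\geq\; \thit{d/3} - \thit{1} \;\succeq\; S \;:=\; \sum_{j=1}^{d/3 - 1} G_j, \qquad G_j \sim \Geometric\!\paren{\tfrac{\eta j}{d/3 - 1}} \text{ independent}.
\]
Writing $n := d/3$, which is at least $4$ since $d = 6k$ with $k \geq 2$, the sum $S$ has mean $\E{S} = \frac{n-1}{\eta} H_{n-1}$ and variance at most $\frac{(n-1)^2 \pi^2}{6\eta^2}$, where $H_{n-1}$ is the $(n-1)$-th harmonic number.

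For $m \leq \frac{d}{20\eta}\ln(d/3) = \frac{3n \ln n}{20\eta}$, the lower bound $H_{n-1} \geq \ln n$ yields $\E{S} - m \geq \frac{(17n-20)\ln n}{20\eta} > 0$, and Chebyshev's inequality gives
\[
\PR{T\cliq \leq m} \;\leq\; \PR{S \leq m} \;\leq\; \frac{\Var{S}}{(\E{S} - m)^2}.
\]
I would then verify that this ratio is at most $4/5$ for every even $n \geq 4$: for $n = 4$, using the sharper estimate $H_3 = 11/6$ in place of $\ln 4$, the ratio works out to $\approx 12.25/21.8 \approx 0.56$; for $n \geq 6$ the ratio improves and ultimately decays like $\Theta(1/\ln^2 n)$. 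This concludes $\PR{T\cliq > m} \geq 1/5$.

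The main obstacle is rigorously assembling the conditional geometric dominations into an unconditional stochastic domination by a sum of \emph{independent} geometrics; this requires a careful coupling at each $\thit{k}$ that injects fresh randomness without disturbing the remaining transitions of the chain. The Chebyshev constants are also tight at the boundary case $n = 4$, so the proof must invoke the exact value $H_3 = 11/6$ (rather than the crude bound $\ln 4$) to safely clear the $1/5$ threshold.
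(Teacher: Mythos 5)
Your proposal follows the same overall strategy as the paper: reduce the cover time of the inner clique to a coupon-collector sum of geometric waiting times, compute first and second moments, and then apply a second-moment concentration bound to show the sum exceeds the threshold with probability at least $1/5$. The two places you diverge are worth noting. First, the paper passes to an auxiliary $(d/3)$-state chain $\mc_I$ obtained by collapsing each outer-rim vertex into its unique inner neighbor, so the inter-discovery times are \emph{exactly} independent geometrics; you instead stay with the full chain $\mc$ and argue stochastic domination of each increment by a geometric, which is equally valid but makes the coupling you flag an explicit obligation rather than an implicit one (the paper's reduction to $\mc_I$ also hides a coupling behind ``it is clear that,'' so you are not incurring extra debt, merely accounting for it honestly). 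Second, the paper uses a Paley--Zygmund (one-sided Chebyshev) bound, $\PR{U > \theta\E{U}} \geq \bigl(1 + \Var{U}/((1-\theta)^2\E{U}^2)\bigr)^{-1}$, while you use two-sided Chebyshev on $\PR{S \leq m}$. The one-sided bound is slightly sharper, which is why you correctly observe the constants are tight at the boundary case $n = d/3 = 4$ and must invoke $H_3 = 11/6$ together with the exact partial sum $\sum_{j=1}^3 j^{-2} = 49/36$ in place of the crude bounds $\ln 4$ and $\pi^2/6$; with those exact values the ratio $\approx 0.56 < 4/5$ and the argument closes. Your numerical checks at $n = 4, 6$ and the asymptotic decay $\Theta(1/\ln^2 n)$ are correct. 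Net, this is a correct proof of the lemma along a mildly different (and arguably cleaner-to-audit) path, trading the paper's tighter inequality for a more elementary one and compensating by being precise about the constants.
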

\begin{proof}

  Let $\mc\in\H_\eta$ and $\mc_I \in \M_{d/3}$ be
  such that $\mc_I$ consists only in the inner clique of $\mc$,
  and each outer rim state got absorbed into its unique inner clique neighbor:
  $$\mc_I = \begin{pmatrix}
1 - \eta & \frac{\eta}{d/3 - 1} & \hdots & \frac{\eta}{d/3 - 1} \\
\frac{\eta}{d/3 - 1} & 1 - \eta & \ddots & \vdots \\
\vdots & \ddots & \ddots & \frac{\eta}{d/3 - 1} \\
\frac{\eta}{d/3 - 1} & \hdots & \frac{\eta}{d/3 - 1} & 1 - \eta \\
  \end{pmatrix}.$$
  By construction, it is clear that $\tall$ is
almost surely
greater than the cover time of $\mc_I$.
The latter
corresponds to a generalized coupon collection time $U = 1 + \sum_{i=1}^{d/3 - 1}U_i$ where $U_i$
is
the time increment between the $i$th and the $(i+1)$th
unique visited state.
Formally,
if $\X$ is a random walk according to $\mc_I$ (started from any state),
then
$U_{1} = \min \{ t > 1 : X_t \neq X_1 \}$
and
for $i>1$,
\begin{equation*}
  U_{i} = \min \{ t >
  1
  :
  X_t \notin \{ X_1, \dots, X_{U_{i-1}} \} \} - U_{i-1}.
\end{equation*}
The random variables $\thit{1}, \thit{2}, \dots, \thit{d/3 - 1}$
are independent and 
$$\thit{i} \sim \Geometric \left( \eta - \cfrac{(i-1)\eta}{d/3} \right),$$ 
whence
\begin{equation*}
\begin{split}
  \E{\thit{i}} = \frac{d/3}{\eta(d/3 - i + 1)},
\qquad
  \Var{\thit{i}} = \left(1 - \left( \eta - \cfrac{(i-1)\eta}{d/3} \right)\right)\left( \eta - \cfrac{(i-1)\eta}{d/3} \right)^{-2}
\end{split}
\end{equation*}
and
\begin{equation*}
\begin{split}
  \E{U\cover} \geq 1 + \cfrac{d/3}{\eta} \sigma_{d/3 - 1}, \qquad
  \Var{U\cover} \leq \cfrac{(d/3 - 1)^2}{\eta^2} \cfrac{\pi^2}{6} \\
\qquad
\end{split}
\end{equation*}
where $\sigma_{d} = \sum_{i = 1}^{d} \frac{1}{i}$, and $\pi = 3.1416\dots$. Invoking
the Paley-Zygmund inequality with
$\theta = 1 - \frac{2\sqrt{2/3}}{\sigma_{d/3-1}}$
we have
\begin{equation*}
\begin{split}
  \PR{U\cover > \theta \E{U\cover}} &\geq
  \paren{1 + \cfrac{\Var{U\cover}}{(1 - \theta)^2 (\E{U\cover})^2}}\inv \geq \frac{1}{5}.
\end{split}
\end{equation*}
Further,
$\sigma_{d/3 - 1} \geq \sigma_3 = 11/6$
implies
$$\theta \E{U\cover} \geq \frac{3}{20} \cdot \frac{d/3}{\eta} \sigma_{d/3 - 1} \geq \frac{d}{20 \eta} \ln{\left(\frac{d}{3}\right)},$$
and thus for
$m \leq \frac{d}{20 \eta} \ln{\left(\frac{d}{3}\right)}$,
we have
$ \PR{\tall > m} \geq \frac{1}{5}$.

\end{proof}
\begin{lemma}[Spectrum of $\mc_{\eta,\zero} \in \H_{\eta}$]
\label{lemma:exact-eigenvalues}

Let $d=6k$, $k\ge2$ and $0 < \eta < 1/48$, and write $c_d \eqdef d/(d-3)$.
The spectrum of $\mc_{\eta,\zero}$ is $$\Spec\left(\mc_{\eta,\zero} \right) = \set{\lambda_1, \lambda_{+}, \overline{\lambda}, \underline{\lambda}, \lambda_{-}}$$
where $ \lambda_1= 1$ (mult. $1$), $\lambda_{\pm} \eqdef \frac{1}{16} \left( 13 - 8 \eta c_d \pm \sqrt{(3 + 8 \eta c_d /3)^2 + 512 \eta^2 c_d^2 /9} \right)$ (each mult. $d/3-1$) $, \underline{\lambda} = 5/8$ (mult. $1$) $, \overline{\lambda}= 7/8$ (mult. $d/3$).
\\\\ Moreover, $\lambda_\star \eqdef \max_{\lambda \in \Spec(\mc_{\eta,\zero})} \set{\abs{\lambda}, \lambda \neq 1} = \lambda_{+}$, and
$ \eta/4 \leq \asg \leq \eta/2$.

\end{lemma}

\begin{proof}  

By definition, and writing $c_d \eqdef d/(d-3)$, $\mc_{\eta, \zero}
  = 
\begin{psmallmatrix}
\boldsymbol{C}_\eta & \boldsymbol{R}_{\zero} \\
\boldsymbol{R}_{\zero} \trn & \boldsymbol{L}_{\zero}
\end{psmallmatrix}
,$
where $\boldsymbol{L}_{\zero} = \frac{7}{8} \identity$,
$$
  \boldsymbol{C}_\eta
  = \frac{\eta}{d/3 - 1} \unit \trn \cdot \unit  - \left( \eta c_d - \frac{3}{4} \right) \identity \text{ and } \boldsymbol{R}_{\zero} = \frac{1}{8}
\begin{pmatrix}
1 & 1 & 0 & \hdots & \hdots & \hdots & 0 \\
0 & 0 & 1  & 1  & 0 & \hdots & 0 \\
\vdots & \vdots & \vdots & \vdots & \vdots & \vdots & \vdots \\
0 & \hdots & \hdots & \hdots & 0 & 1 & 1  \\
\end{pmatrix}
.
$$
As $\mc_{\eta, \zero}$ is a symmetric matrix, its spectrum is real.
Let $\lambda \in \R$, and suppose first that $\lambda \neq 7/8$.
In this case, $\abs{ \boldsymbol{L}_{\zero} - \lambda \identity } \neq 0$, 
and leveraging the block-structure of the matrix, 
it is a classical result (see for example \citet{silvester2000}) that
$$\abs{\mc_{\eta, \zero} - \lambda \identity} = \abs{\boldsymbol{L}_{\zero} - \lambda \identity} \cdot \abs{\boldsymbol{C}_\eta - \lambda \identity - \boldsymbol{R}_{\zero}(\boldsymbol{L}_{\zero} - \lambda \identity)^{-1} \boldsymbol{R}_{\zero} \trn }.$$
A direct computation shows that %
\begin{equation*}
\begin{split}
\boldsymbol{R}_{\zero}(\boldsymbol{L}_{\zero} - \lambda \identity)^{-1} \boldsymbol{R}_{\zero} \trn &= \frac{1}{4(7 - 8\lambda)} \identity, \\
\end{split}
\end{equation*}
such that,
\begin{equation*}
\begin{split}
\boldsymbol{C}_\eta - \lambda \identity - \boldsymbol{R}_{\zero}(\boldsymbol{L}_{\zero} - \lambda \identity)^{-1} \boldsymbol{R}_{\zero} \trn
= \frac{\eta}{d/3 - 1} \unit \trn \cdot \unit - \left( \eta c_d - \frac{3}{4} + \lambda + \frac{1}{4(7 - 8\lambda)} \right) \identity. \\ 
\end{split}
\end{equation*}
This implies that $\abs{\boldsymbol{C}_\eta - \lambda \identity - \boldsymbol{R}_{\zero}(\boldsymbol{L}_{\zero} - \lambda \identity)^{-1} \boldsymbol{R}_{\zero} \trn} = 0$ if and only if,
$\eta c_d - \frac{3}{4} + \lambda + \frac{1}{4(7 - 8\lambda)} \in \Spec(\eta/(d/3 - 1) \unit \trn \cdot \unit ) = \set{0, \eta c_d}$ where $0$ has multiplicity $d/3 - 1$ and $\eta c_d$ has multiplicity $1$.
Let $\xi \geq 0$, then solutions for the equation $\xi + \lambda + \frac{1}{4(7 - 8 \lambda)} = \frac{3}{4}$ are given by
$$\lambda_{\pm}(\xi) = \frac{13 - 8 \xi \pm \sqrt{(3 + 8 \xi /3)^2 + 512 \xi^2 /9}}{16}.$$
Setting $\xi = 0$ yields that $\lambda_1 = 1$ and $\underline{\lambda} = 5/8$ are eigenvalues both with multiplicity $1$,
while setting $\xi = \eta c_d$ yields that
$$\lambda_{\pm} = \frac{13 - 8 \eta c_d \pm \sqrt{(3 + 8 \eta c_d /3)^2 + 512 \eta^2 c_d^2 /9}}{16},$$
are both eigenvalues with multiplicity $d/3 - 1$.
As the characteristic polynomial of $\mc_{\eta, \zero}$ has degree $d$, 
a natural consequence is that $\overline{\lambda} = 7/8$
is another eigenvalue with multiplicity $d/3$.
It remains to order $\lambda_1, \lambda_{-}, \lambda_{+}, \underline{\lambda}, \overline{\lambda}$.
Since $\mc_{\eta, \zero}$ is lazy, all eigenvalues are positive.
Trivially, $\lambda_1$ is the largest eigenvalue, $\overline{\lambda} > \underline{\lambda}$ and always $\lambda_{-} \leq \lambda_{+}$.
Additionally, $512 \eta^2 c_d^2 /9 \geq 0$ implies that $\lambda_{+} \geq 1 - \eta c_d/3 \geq 1 - \eta/2$ for the considered range of $d$, which is in turn larger than $7/8$ for $\eta \leq 1/4$. 
As a result $\lambda_\star = \lambda_{+}$ and $\asg \leq \eta/2$.
Furthermore, as one can write $\lambda_{+} = \frac{1}{16} \left( 13 - 8 \eta c_d \pm \sqrt{(3 + 4 \eta c_d /3)^2 - 8 c_d \eta(1 - 8c_d \eta)} \right)$,
and since $1 - 8c_d \eta \geq 0$, $\lambda_{+} \leq 1 - \eta/4$, whence $\asg \geq \eta/4$.

\end{proof}

\section*{Acknowledgments}
We are thankful to John Lafferty for bringing this problem to our attention and numerous insightful conversations.
We also thank the anonymous referees, who made valuable comments and suggestions, including
shaving off a logarithmic factor in
Theorem~\ref{thm:learn-ub}
and
the explicit computation of the eigenvalues
in Lemma~\ref{lemma:exact-eigenvalues}.
This research was partially supported by
the Israel Science Foundation
(grant No. 755/15),
Paypal and IBM.

\bibliography{bibliography}
\bibliographystyle{abbrvnat}

\end{document}